\documentclass[a4paper, 11pt]{article}
\usepackage[utf8]{inputenc}
\usepackage{amssymb}
\usepackage{amsmath}
\usepackage{amsthm}
\usepackage{tikz}
\usepackage{subfig}
\usepackage{xcolor}
\usepackage{pgfplots}
\usetikzlibrary{arrows}
\usepackage{filecontents}
\definecolor{midnightblue}{rgb}{0.4,0,0.6}
\definecolor{limegreen}{rgb}{0,0.7,0}
\usepackage{pgfplotstable}
\usetikzlibrary{external}
\usepackage{cleveref}
\usepackage{bbold}
\usepackage{url}
\usepackage{authblk}
\usetikzlibrary{plotmarks}
\usetikzlibrary{patterns}

\let\emptyset\varnothing

\begin{filecontents*}{bibli.bib}

@Book{Fisher,
  editor                   = {Fisher, Roger and Ury, William L and Patton, Bruce},
  title                    = {Getting to yes:  Negotiating agreement without giving in},
  publisher                = {2d ed.  New York:  Penguin},
  year                     = {1991},
}

@Article{Gneezy2003Bargaining,
  Title                    = {Bargaining under a deadline: evidence from the reverse ultimatum game},
  Author                   = {Gneezy, Uri and Haruvy, Ernan and Roth, Alvin E},
  Journal                  = {Games and Economic Behavior},
  Year                     = {2003},
  Number                   = {2},
  Pages                    = {347--368},
  Volume                   = {45},

  Doi                      = {10.1016/S0899-8256(03)00151-9},
  Owner                    = {cburon},
  Timestamp                = {2015.08.06}
}

@InCollection{Sandholm1999Distributed,
  Title                    = {Distributed rational decision making},
  Author                   = {Sandholm, Tuomas W},
  Booktitle                = {Multiagent Systems},
  Publisher                = {MIT Press Cambridge},
  Year                     = {1999},
  Pages                    = {201--258},

  _markedentry            = {[cburon:6]},
  File                     = {:/home/cburon/Dropbox/Thèse/Bibliographie/Sandholm - Distributed rational decision making - 1999.pdf:PDF},
  Journal                  = {Multiagent systems: a modern approach to distributed artificial intelligence},
  Keywords                 = {decision-making},
  Owner                    = {cburon},
  Timestamp                = {2014.11.18},
  Url                      = {http://dl.acm.org/citation.cfm?id=305606.305611}
}

@Article{Zhang2016Automated,
  Title                    = {An Automated Multi-Issue Negotiation Mechanism Based on Intelligent Agents in E-Commerce},
  Author                   = {Zhang, LinLan and Liu, Qing},
  Journal                  = {Journal of Advanced Management Science},
  Year                     = {2016},
  Month                    = {Mar},
  Number                   = {2},
  Pages                    = {172--175},
  Volume                   = {4},

  Doi                      = {10.12720/joams.4.2.172-175},
  Owner                    = {cburon},
  Timestamp                = {2015.08.06}
  }
  
@Article{Faratin1998Negotiation,
  Title                    = {Negotiation decision functions for autonomous agents},
  Author                   = {Faratin, Peyman and Jennings, Nicholas R and Sierra, Carles},
  Journal                  = {Robotics and Autonomous Systems},
  Year                     = {1998},
  Number                   = {3-4},
  Pages                    = {159--182},
  Volume                   = {24},

  Doi                      = {http://dx.doi.org/10.1016/S0921-8890(98)00029-3},
  Owner                    = {cburon},
  Timestamp                = {2015.10.01}
}

@incollection{carmel1996opponent,
  title                    = {Opponent modeling in multi-agent systems},
  author                   = {Carmel, David and Markovitch, Shaul},
  booktitle                = {Adaption and learning in multi-agent systems},
  pages                    = {40--52},
  year                     = {1996},
  publisher                = {Springer}
}

@inproceedings{oshrat2009facing,
  title                    = {Facing the challenge of human-agent negotiations via effective general opponent modeling},
  author                   = {Oshrat, Yinon and Lin, Raz and Kraus, Sarit},
  booktitle                = {Proceedings of The 8th International Conference on Autonomous Agents and Multiagent Systems-Volume 1},
  pages                    = {377--384},
  year                     = {2009},
  organization             = {International Foundation for Autonomous Agents and Multiagent Systems}
}

@Article{survey,
  author                   = {Baarslag, Tim and Hendrikx, Mark J C and Hindriks, Koen V and Jonker, Catholijn M}, 
  title                    = {Learning about the opponent in automated bilateral negotiation: a comprehensive survey of opponent modeling techniques},
  journal                  = {Autonomous Agents and Multi-Agent Systems},
  year                     = {2015},
  pages                    = {1--50},
}

@Book{Palgrave2008,
title                      = {The New Palgrave Dictionary of Economics},
editor                     = {Blume, Lawrence E and Durlauf, Steven F},
publisher                  = {Palgrave Macmillan},
year                       = {2008}
}

@Inproceedings{Hindriks2009benefits,
author                     = {Hindriks, Koen V and Jonker, Catholijn M and Tykhonov, Dmytro}, 
booktitle                  = {IEEE/WIC/ACM International Joint Conferences on Web Intelligence and Intelligent Agent Technologies, 2009. WI-IAT '09.},
title                      = {The Benefits of Opponent Models in Negotiation},
year                       = {2009},
volume                     = {2},
pages                      = {439--444},
month                      = {Sept},
}

@Article{VanBragt2003Why,
  Title                    = {Why Agents for Automated Negotiations Should Be Adaptive},
  Author                   = {Van Bragt, David D B and La Poutré, Han},
  Journal                  = {Netnomics},
  Year                     = {2003},
  Number                   = {2},
  Pages                    = {101--118},
  Volume                   = {5},

  Doi                      = {10.1023/A:1026021701904},
  Owner                    = {cburon},
  Timestamp                = {2016.05.06}
}

@Article{Myerson1979Incentive,
  Title                    = {Incentive Compatibility and the Bargaining Problem},
  Author                   = {Myerson, Roger B},
  Journal                  = {Econometrica},
  Year                     = {1979},
  Number                   = {1},
  Pages                    = {61--73},
  Volume                   = {47},

  Doi                      = {10.2307/1912346},
  Owner                    = {cburon},
  Timestamp                = {2016.05.06}
}

@Article{Chatterjee1982Incentive,
  Title                    = {Incentive Compatibility in Bargaining Under Uncertainty},
  Author                   = {Chatterjee, Kaylan},
  Journal                  = {The Quarterly Journal of Economics},
  Year                     = {1982},
  Number                   = {4},
  Pages                    = {717--726},
  Volume                   = {97},

  Doi                      = {10.2307/1885109},
  Owner                    = {cburon},
  Timestamp                = {2016.05.06}
}

@InBook{Hurwicz1972informationally,
  Title                    = {Decision and organization},
  Author                   = {Hurwicz, Leonid},
  Chapter                  = {On informationally decentralized systems},
  Editor                   = {Radner, Roy and McGuire, Charles B},
  Pages                    = {297--336},
  Publisher                = {North-Holland Pub. Co.},
  Year                     = {1972},

  Owner                    = {cburon},
  Timestamp                = {2016.05.06}
}

@Article{Weidner1992generalized,
  Title                    = {The generalized Nash bargaining solution and Incentive Compatible Mechanisms},
  Author                   = {Weidner, Frank},
  Journal                  = {International Journal of Game Theory},
  Year                     = {1992},
  Number                   = {2},
  Pages                    = {109--129},
  Volume                   = {21},

  Doi                      = {10.1007/BF01245455},
  Owner                    = {cburon},
  Timestamp                = {2016.05.09}
}

@InBook{Myerson1989Mechanism,
  Title                    = {Allocation, Information and Markets},
  Author                   = {Myerson, Roger B},
  Chapter                  = {Mechanism design},
  Editor                   = {Eatwell, John
and Milgate, Murray
and Newman, Peter},
  Pages                    = {191--206},
  Publisher                = {Palgrave Macmillan UK},
  Year                     = {1989},

  Doi                      = {10.1007/978-1-349-20215-7_20},
  Owner                    = {cburon},
  Timestamp                = {2016.05.09}
}

@Article{Maskin2008Mechanism,
  Title                    = {Mechanism design: How to implement social goals},
  Author                   = {Maskin, Eric S},
  Journal                  = {The American Economic Review},
  Year                     = {2008},
  Number                   = {3},
  Pages                    = {567--576},
  Volume                   = {98},

  Doi                      = {http://www.jstor.org/stable/29730086},
  Owner                    = {cburon},
  Timestamp                = {2016.05.09}
}

@Article{Hurwicz1973design,
  Title                    = {The design of mechanisms for resource allocation},
  Author                   = {Hurwicz, Leonid},
  Journal                  = {The American Economic Review},
  Year                     = {1973},
  Number                   = {2},
  Pages                    = {1--30},
  Volume                   = {63},

  Owner                    = {cburon},
  Timestamp                = {2016.05.09},
  Url                      = {http://www.jstor.org/stable/1817047}
}

@InProceedings{Hindriks2008Opponent,
  Title                    = {Opponent Modelling in Automated Multi-issue Negotiation Using Bayesian Learning},
  Author                   = {Hindriks, Koen and Tykhonov, Dmytro},
  Booktitle                = {Proceedings of the 7th International Joint Conference on Autonomous Agents and Multiagent Systems},
  Year                     = {2008},
  Pages                    = {331--338},
  Publisher                = {International Foundation for Autonomous Agents and Multiagent Systems},
  Volume                   = {1},

  Owner                    = {cburon},
  Timestamp                = {2016.05.09},
  Url                      = {http://dl.acm.org/citation.cfm?id=1402383.1402433}
}

@TechReport{Hou2004Modelling,
  Title                    = {Modelling agents behaviour in automated negotiation},
  Author                   = {Hou, Chongming},
  Institution              = {Knowledge Media Institute},
  Year                     = {2004},

  Owner                    = {cburon},
  Timestamp                = {2016.05.09}
}

@InProceedings{Ganzfried2011Game,
  Title                    = {Game Theory-based Opponent Modeling in Large Imperfect-information Games},
  Author                   = {Ganzfried, Sam and Sandholm, Tuomas W},
  Booktitle                = {The 10th International Conference on Autonomous Agents and Multiagent Systems},
  Year                     = {2011},
  Pages                    = {533-540},
  Publisher                = {International Foundation for Autonomous Agents and Multiagent Systems},
  Series                   = {AAMAS '11},
  Volume                   = {2},

  Owner                    = {cburon},
  Timestamp                = {2016.05.09},
  Url                      = {http://dl.acm.org/citation.cfm?id=2031678.2031693}
}

@InProceedings{Laviers2009Opponent,
  Title                    = {Opponent Modeling and Spatial Similarity to Retrieve and Reuse Superior Plays},
  Author                   = {Laviers, Kennard and Sukthankar, Gita and Klenk, Matthew and Aha, David W and Molineaux, Matthew},
  Booktitle                = {Proceedings of the ICCBR Workshop on Case-Based Reasoning for Computer Games},
  Year                     = {2009},

  Owner                    = {cburon},
  Timestamp                = {2016.05.09}
}

@Article{Carbonneau2008Predicting,
  Title                    = {Predicting opponent’s moves in electronic negotiations using neural networks},
  Author                   = {Réal Carbonneau and Gregory E. Kersten and Rustam Vahidov},
  Journal                  = {Expert Systems with Applications},
  Year                     = {2008},
  Number                   = {2},
  Pages                    = {1266--1273},
  Volume                   = {34},

  Doi                      = {10.1016/j.eswa.2006.12.027},
  Owner                    = {cburon},
  Timestamp                = {2016.05.09}
}

@Article{Baarslag2015Automated,
  Title                    = {The Automated Negotiating Agents Competition, 2010--2015},
  Author                   = {Baarslag, Tim and Aydo{\u{g}}an, Reyhan and Hindriks, Koen V and Fujita, Katsuhide and Ito, Takayuki and Jonker, Catholijn M},
  Journal                  = {AI Magazine},
  Year                     = {2015},
  Number                   = {6},
  Pages                    = {115--118},
  Volume                   = {34},

  Owner                    = {cburon},
  Timestamp                = {2016.05.09},
  Url                      = {http://www.aaai.org/ojs/index.php/aimagazine/article/view/115-118}
}

@Article{Ausubel1989direct,
  Title                    = {A direct mechanism characterization of sequential bargaining with one-sided incomplete information},
  Author                   = {Lawrence M. Ausubel and Raymond J. Deneckere},
  Journal                  = {Journal of Economic Theory},
  Year                     = {1989},
  Number                   = {1},
  Pages                    = {18--46},
  Volume                   = {48},

  Doi                      = {10.1016/0022-0531(89)90118-X},
  Owner                    = {cburon},
  Timestamp                = {2016.05.12}
}

@Misc{ANAC,
  Title                    = {The Automated Negociation Agent Competition website},
  HowPublished             = {\url{http://mmi.tudelft.nl/negotiation/index.php/Automated_Negotiating_Agents_Competition_(ANAC)}},

  Owner                    = {cburon},
  Timestamp                = {2016.05.13}
}

\end{filecontents*}

\usetikzlibrary{shapes}
\usetikzlibrary{positioning}

\newtoks\bsubfloattoks
\newdimen\bsubfloatht

\makeatletter

\newcommand{\bsubfloat}[2][]{%
  \sbox\z@{#2}%
  \ifdim\bsubfloatht<\ht\z@
    \bsubfloatht=\ht\z@
  \fi
  \bsubfloattoks=\expandafter{\the\bsubfloattoks
    \bsubfloatspace\subfloat[#1]{\vbox to\bsubfloatht{\hbox{#2}\vfill}}}%
}
\newcommand\resetbsubfloatrows{\bsubfloatht\z@\bsubfloattoks={\@gobble}}
\newcommand{\printbsubfloatrow}{\the\bsubfloattoks}

\newenvironment{customlegend}[1][]{%
        \begingroup
        \csname pgfplots@init@cleared@structures\endcsname
        \pgfplotsset{#1}%
    }{%
        \csname pgfplots@createlegend\endcsname
        \endgroup
    }%
    \def\addlegendimage{\csname pgfplots@addlegendimage\endcsname}
\makeatother
\newcommand{\personnalheight}{.2\textheight}
\newcommand{\personnalwidth}{.5\textwidth}

\newcommand{\smallwidth}{.4\textwidth}

\newcommand{\wrt}{\textit{w.r.t.}{}}

\newtheorem{definition}{Definition}
\newtheorem{theorem}{Theorem}

\crefname{figure}{figure}{figures}
\crefname{equation}{equation}{equations}
\crefname{proposition}{theorem}{theorems}

\title{Curiosity-Aware Bargaining}
\author[1,2]{Cédric L.R. \textsc{Buron}
\thanks{Corresponding Author: Cédric Buron, UPMC
cedric.buron@lip6.fr\\We thank Finance Innovation, ECM2, nov@log and the Région Île de France for supporting our project.}}
\author[2]{Sylvain \textsc{Ductor}}
\author[2,3]{Zahia \textsc{Guessoum}}

\affil[1]{Kyriba, Saint Cloud, France}
\affil[2]{Universit\'e Pierre et Marie Curie, Paris, France}
\affil[3]{Université de Reims Champagne Ardennes}

\begin{document}
\pagestyle{headings}
\def\thepage{}

\maketitle
\begin{abstract}
Opponent modeling consists in modeling the strategy or preferences of an agent thanks to the data it provides.
In the context of automated negotiation and with machine learning, it can result in an advantage so
overwhelming that it may restrain some casual agents to be part of the bargaining.
We qualify as ``curious'' an agent driven by the desire of negotiating in order to collect information and improve its opponent model.
However,  neither curiosity-based rationality nor curiosity-robust protocol have been studied in automatic negotiation.
In this paper, we rely on mechanism design to propose three extensions of the standard bargaining protocol that limit information leak.
Those extensions are supported by an enhanced rationality model, that considers the exchanged information. Also, they are theoretically analyzed and experimentally evaluated.
\end{abstract}

\textbf{Keywords:}
Automated Negotiation, Bargaining, Opponent Modeling, Rationality, Mechanism Design, Curiosity
\pagestyle{empty}

\section{Introduction}

Negotiation involves several parties that exchange proposals in order to reach a mutually beneficial outcome.
Each participant
has private information and a utility function that may be in conflict with other participants.

To get a good agreement for herself, a participant may choose to use opponent modeling (see \cite{survey} for a survey on this topic).
Opponent modeling is based on machine learning techniques, which are fed by the sequences of exchanged proposals.
It allows to learn the way the opponent values goods
(\cite{carmel1996opponent,oshrat2009facing}), which provides a major advantage in the future negotiations (\cite{VanBragt2003Why}).

Opponent modelers have therefore an incentive to pretend to negotiate to get a good while they are seeking as much data as possible from their opponent. Opponent modelers that act this way are called ``curious''.
Curious agents have a harmful influence on negotiations: collected information
provides them more efficient modelings, which will significantly unbalance the future negotiations in their favor.
Consequently, casual agents may be discouraged and be reluctant to use the bargaining protocol.

In this paper, we design a protocol that extends the standard bargaining in order to restrain the sphere of operation of curious agents.
To do that, we first present related work (\Cref{Sec:RelWork}) and model how the agents evaluate exchanged information by extending the classical model of rationality (\Cref{Sec:Model}). 
We then improve standard bargaining with three theoretically analyzed extensions (\Cref{Sec:Contrib}). The first one prevents agents from initiating the negotiation if there is no chance to reach an agreement. The second one directly limits the number of exchanged proposals. The third one prohibits the agents from leaving the negotiation without agreement. The significance of our proposed extensions is proved empirically (\Cref{Sec:XP}). We finally summarize our proposal and give some possible extensions to our work (\Cref{Sec:Concl}).

\section{Related Work}
\label{Sec:RelWork}

In bargaining \cite{Palgrave2008}, each party tries to reach a joint decision involving two conflicting goals:
maximizing the utility of an agreement and maximizing the chance of acceptance by the
opponent. Relevant tactics have been proposed to achieve the first goal such as the efficient Tit-for-Tat~\cite{Hindriks2009benefits}.
The second goal requires skills such as curiosity to understand the
opponents' strategies and preferences. Psychologists Fisher {\it et al.} \cite{Fisher} define the curiosity
as the ability to see the situation as the other side sees it.

In automated negotiation, this ability corresponds to what is called ``opponent modeling'' \cite{survey}.
We distinguish two approaches: on single session (online) and on multi-session (offline).
Among the work on single session modeling,
the most widely used techniques are Bayesian learning \cite{Hindriks2008Opponent}, non-linear regression \cite{Hou2004Modelling}, and SVM \cite{Laviers2009Opponent}. Multi-session techniques include Kernel Density Estimation \cite{oshrat2009facing} and neural networks \cite{Carbonneau2008Predicting}.
The preponderance of this approach is attested by the Automated Negotiation Agent Competition \cite{ANAC} since all high ranking agents make a wide use of it \cite{Baarslag2015Automated}.

The benefits of opponent modeling can lead to manipulative behavior where agents try to get information at any cost. Also, the negotiation ends up at the expense of the agents that do not mean to invest in this technique. Bargaining being vulnerable to those behaviors, this protocol is disadvantageous for the latters and therefore unreliable in general case.
It is thus necessary to develop a new bargaining framework resistant to those practices. Such a topic belongs to the theory of mechanism design.

Mechanism design \cite{Myerson1989Mechanism,Maskin2008Mechanism} studies the conception of ``incentive-compatible'' mechanisms
\cite{Hurwicz1972informationally}, \textsl{i.e.}  protocols in which the agents act according to the desire of the designer of the protocol.
As an example,
in the context of mediated bargaining\footnote{in mediated bargaining, the negotiators use a trusted third party called mediator. The latter uses information delivered by the negotiators to generate the proposals.}, several work ({\it e.g.} \cite{Myerson1979Incentive,Chatterjee1982Incentive}) give a theoretical framework on how to design a protocol in which the agents provide their real reserve prices
to the mediator. 

To the best of our knowledge, there is no work on mechanism design related to standard (non-mediated) bargaining. In particular, there is no attempt to design an extension of bargaining resistant to the aforementioned problem.

\section{Model of Curiosity-Aware Bargaining} \label{sec:bargain}

\label{Sec:Model}

We introduce in this section an original rationality model for bargaining.
Agents consider not only the price associated to the good but also the leaked and collected information.
We first introduce a model of the bargaining protocol that explicitly represents this information.
Then we extend the two affected objects: the agent utility and the reserve price.
Finally we propose a classification of curiosity-aware agents.

\subsection{Bargaining Model}\label{sec:bargain:model}

Bargaining is a type of negotiation in which two agents, a \emph{purchaser} $p$ and a \emph{seller} $s$, 
seek mutual agreement on the price of a good.
Each agent is endowed with an \textbf{agreement domain} that induces, for the seller, the minimum price it is willing to sell the good, and for the purchaser, the maximum price it is willing to pay for it.
Those values are called  \textit{reserve prices} ; an agent has a null utility if the agreed price is exactly the reserve price, 
a positive utility if it is better and a negative one if it is worse.

A bargaining proceeds with a \textbf{a sequence of exchanged proposals} which respect the monotonic concession principle: in any two successive proposals of an agent the second is always better for it opponent than the first. 
 A bargaining \textbf{ends either by an accept or a reject} by one of the agents. The reject may be sent at any time to end the negotiation without any transaction.

\begin{definition}[Bargaining]\label{def:barg}
A bargaining proceeds with exchange of proposals and ends with an accept or a reject of one of the participants. 
We note $V$ the space of lists of messages sent by one party during a bargaining.
\begin{equation}
 V = \{(k,p,e)| k \in \mathbb N, p \in \mathbb{R}^{k}, e \in \{\emptyset,\top,\bot\}\}
\end{equation}
where
\begin{description}
 \item[$k \in \mathbb N$] is the number of proposals
 \item[$p \in \mathbb{R}^{k}$] is the list of proposed prices
 \item[$e \in \{\emptyset,\top,\bot\}$] is the ending message: either $\top$ if this party accepts the last deal, 
$\bot$ if it rejects it or $\emptyset$ if the other party ends the negotiation.
\end{description}
We note $\mathcal{B}$ the set of bargaining records. A bargaining $b \in \mathcal B$ is defined with a six-tuple: 
 \begin{equation}
 b = (g_b,s_b,p_b,\pi_b,\nu^s_b,\nu^p_b)
 \end{equation}
 where
 \begin{description}
  \item[$g_b \in \mathcal Good$] is the negotiated good, 
\item[$s_b, p_b \in \mathcal Agent^2$] are respectively the seller and the purchaser identifiers
  \item[$\pi_b \in \mathbb R^+ \cup \{\bot\}$] is the negotiated price if the negotiation succeeds or $\bot$ if it fails.
\item[$\nu^s_b, \nu^p_b \in V^2$] are respectively the sequence of the seller and purchaser proposals.
 \end{description}
\end{definition}

\subsection{Curiosity-Aware Utility \& Reserve Price}\label{sec:discussion}
In the classical rationality model used in bargaining, an agent is only interested in the final price of the good. A curiosity-aware agent is also interested in the information exchanged during the bargaining process. This affects both its utility model and its reserve price. 

 \begin{definition}[Curiosity-Aware Utility]
The utility of a curiosity-aware agent is computed from the final price and the sequence of exchanges.
\begin{equation}
\forall a \in Agent, \forall b=(g_b,s_b,p_b,\pi_b,\nu^s_b,\nu^p_b)\in\mathcal{B},
\hfill\begin{array}[t]{llll}
u_a :& \mathbb R \times V \times V\  & \to & \mathbb R \\
     & (\pi_b, \nu^s_b,   \nu^p_b) & \mapsto & u_a(\pi_b, \nu^s_b,   \nu^p_b)
\end{array}\hfill
\end{equation}
 \end{definition}

The reserve price is the price for which the agent utility is null.
Classical agents have a constant reserve price throughout the negotiation
, 
while the reserve price of curiosity-aware agents is affected by the proposals made so far.

\begin{definition}[Curiosity-Aware Reserve Price]
Let $b$ be a bargaining. We note $\pi^p_b(k) \in \mathbb R$ the reserve price of the purchaser at the intermediary step $k$.
It is defined by the following property:
\begin{equation}\label{eq:equilibre}
\begin{aligned}
\forall  b \in \mathcal B, \forall k \leq k^p_b, u_p(\pi^p_b(k), \nu^s_{\setminus k}, \nu^p_{\setminus k}) = 0
\end{aligned}
\end{equation}
where $k^p_b$ is the number of proposals made by purchaser $p$ (see \Cref{def:barg}), $\nu^p_{\setminus k}$ 
holds the $k$ first proposals of the list $\nu^p$ and $\nu^s_{\setminus k}$ 
holds the $k$ first proposals of the list $\nu^s$ (if the seller made the first proposal, $k-1$ else).
 \end{definition}
  
\subsection{Agent Taxonomy}

We now introduce three canonical types of agents that are characterized by the way they value information and price.
For the sake of simplicity, we rely on comparators that are indifferently used for sellers and purchasers.
For Agent $a$:
\begin{description}
 \item[$\succ^\pi_a$ and $\sim^\pi_a$] are respectively strict and equivalence orders over prices. 
 A seller prefers a higher price whereas a purchaser prefers a lower one.
 \item[$\succ^\nu_a$ and $\sim^\nu_a$] are respectively strict and equivalence orders 
 over sequences of proposals (\textit{i.e.} $V$).
 For two sequences of proposals $\nu$ and $\nu'$, $\nu \succ^\nu_a \nu'$
 means that agent $a$ considers that $\nu$ carries more information than $\nu'$
\end{description}

In order to consistently extend the classical bargaining rationality model, we assume that, 
any other thing being equal (\textit{ceteris paribus}), 
any agent prefers a bargaining resulting in a better price outcome.
In other words,
if two bargainings reveal the same amount of information, an agent $a$ prefers the one that optimizes the price. 
The following equation is verified by any rational agent:
 \begin{equation} \label{eq:rational}
 \forall b,b' \in \mathcal B^2,
 \left\{\begin{array}{l}
   \pi_b \succ^\pi_a \pi_{b'} \\
 \nu^s_b \sim^\nu_a \nu^s_{b'}\\
 \nu^p_b \sim^\nu_a \nu^p_{b'}
       \end{array}
 \right. \implies u_a(\pi_b, \nu^s_b,   \nu^p_b) > u_a(\pi_{b'}, \nu^s_{b'},   \nu^p_{b'})
 \end{equation}

Curiosity-aware agents may also consider both the leaked and revealed information.
An agent can be \textit{secretive} or \textit{unsecretive} depending on whether or not it is concerned about the information it reveals. Also, an agent can be \textit{curious} or \textit{uncurious} depending on whether 
or not it is interested in the information it collects. 
\Cref{fig:reservepr:all} presents examples of reserve price functions for the three types of agents we detail below.
They are drown from \Cref{util}, used in our experiments.

\begin{figure}[h!]
    \centering
    \pgfplotsset{every axis legend/.append style={at={(0.97,0.03)},anchor=south east}}
 \begin{tikzpicture}
 \begin{axis}[
xlabel={$k$},
ylabel={$\pi$},
ymin=2, ymax=25,
xmin=0, xmax=40,
minor y tick num=1,
legend entries = {uncurious reserve price, secretive reserve price, curious reserve price, purchaser proposals},
legend style = {at={(1.5,0)}, anchor=south west},
width=\smallwidth,
xtick={0,10,20,30,40},
ylabel near ticks,
xlabel near ticks,
]
\newcommand\pie{15}
\newcommand\kpa{0.1}
\newcommand\numax{10}
\newcommand\bta{4}
\newcommand\gma{2}
\addplot[
color = blue,
domain=0:40,
samples=200,
dashdotted,
thick,
]
{15+0*x};
\addplot[
color = red,
domain=0:40,
samples=200,
densely dotted,
thick,
]
{15*ln(100000)/ln(100000+100000*x)};
\addplot[
color = violet,
domain=0:40,
samples=200,
dashed,
thick,
]
{15*ln(100000*x+100000)/ln(100000)};

\addplot[
domain=0:39,
samples=14,
only marks,
mark=+,
color=black,
very thick
]
{\gma + (\pie-\gma)*(\kpa+(1-\kpa)*(x/\numax)^(1/\bta))};

\addplot[
domain=-10:-8,
samples=10,
only marks,
mark=x,
color=black,
very thick
]
{0};
\end{axis}
\end{tikzpicture}
    \caption{Proposals and reserve price of uncurious, curious and secretive agents}
    \label{fig:reservepr:all}
\end{figure}
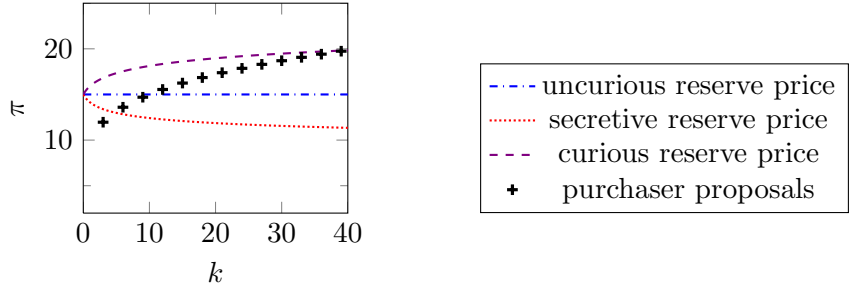
%

\paragraph{An Uncurious (unsecretive) agent}
implements the classical understanding of rationality in bargaining: its sole interest is the final price.

 \begin{definition}[Uncurious Agent]\label{def:uncurious}
An \textit{uncurious} agent utility is not affected by the exchanged information.
  \begin{equation}  
  \forall b,b' \in \mathcal B^2,
 \pi_b \succ^\pi_a \pi_{b'} \implies u_a(\pi_b, \nu^s_b,   \nu^p_b) > u_a(\pi_{b'}, \nu^s_{b'},\nu^p_{b'}) 
  \end{equation}
 \end{definition}

\paragraph{A Secretive (uncurious) agent}  is aware that its actions reveal information which can be used by malicious agents at its expense.
It is however only interested in selling/purchasing goods and not in collecting information about other agents.
The price it is willing to pay for a good decreases while it is making proposals. 
Thus, it tends to drop the negotiation sooner than a classical (uncurious) agent.
Also, if the deal is not concluded, its utility is negative since it reveals information and does not get any outcome.

 \begin{definition}[Secretive Agent] \label{def:secretive}
A \textit{secretive} agent prefers not to reveal information, but is not interested by the collected information. In the case of a purchaser $p$:
  \begin{equation}  \label{eq:secretive}
 \forall b,b' \in \mathcal B^2,
 \left\{\begin{array}{l}
 \pi_b \sim^\pi_p \pi_{b'}    \\
 \nu^p_{b'} \succ^\nu_p \nu^p_b 
 \end{array}
 \right. \implies u_p(\pi_b, \nu^s_b,\nu^p_b) > u_p(\pi_{b'}, \nu^s_{b'},\nu^p_{b'})
  \end{equation}
 \end{definition}

\paragraph{A Curious (unsecretive) agent}  is interested in collecting information that it may use for opponent modeling.
It is not concerned about the information it may reveal.
The price it is willing to pay for a good increases when the amount of collected information increases. 
It has a positive utility in case of reject.
A good strategy for such an agent is to make the bargaining end as late as possible and to eventually reject it.
\begin{definition}[Curious Agent] \label{def:curious}
 A \textit{curious} agent aims collecting as much information as possible during the exchange, but is not concerned by leaked information. 
 In the case of a purchaser $p$:
  \begin{equation}  \label{eq:curious}
 \forall b,b' \in \mathcal B^2,
 \left\{\begin{array}{l}
 \pi_b \sim^\pi_p \pi_{b'}    \\
 \nu^s_b \succ^\nu_p  \nu^s_{b'}
 \end{array}
 \right. \implies u_p(\pi_b, \nu^s_b,\nu^p_b) > u_p(\pi_{b'}, \nu^s_{b'},\nu^p_{b'}) 
  \end{equation}
\end{definition}

\paragraph{A Curious and Secretive agent} is both doing opponent modeling and aiming to preserve itself from being modeled by other agents.
It implements  \Cref{eq:curioussecretive} which is a more constrained version of    \Cref{eq:secretive,eq:curious} (noted $(\ref{eq:secretive}')$ and $(\ref{eq:curious}')$). It is interested  by either collecting more information \textit{ceteris paribus}, or revealing less information \textit{ceteris paribus}.
 In the case of a purchaser $p$:
  \begin{equation}  \label{eq:curioussecretive}
 \forall b,b' \in \mathcal B^2,
 \overset{(\ref{eq:secretive}')}{\left\{\begin{array}{l}
 \pi_b \sim^\pi_p \pi_{b'}    \\
 \nu^s_b \sim^\nu_p  \nu^s_{b'} \\
 \nu^p_{b'} \succ^\nu_p \nu^p_b 
 \end{array}\right.}
 \;\;
 \vee
 \;\;
 \overset{(\ref{eq:curious}')}{\left\{\begin{array}{l}
 \pi_b \sim^\pi_p \pi_{b'}    \\
 \nu^p_b \sim^\nu_a  \nu^p_{b'} \\
 \nu^s_b \succ^\nu_a  \nu^s_{b'}
 \end{array} \right.}
 \implies u_p(\pi_b, \nu^s_b,\nu^p_b) > u_p(\pi_{b'}, \nu^s_{b'},\nu^p_{b'}) 
  \end{equation}
 
We highlight the fact that the three previously introduced agent types can be used as atomic 
building blocks of this last type.
Indeed, during an exchange of messages, at each step,  a curious and secretive agent acts like one and only one canonical type.
For instance, when comparing two negotiations where $(\ref{eq:secretive}')$ (resp. $(\ref{eq:curious}')$)  applies, 
its utility is similar to the one of a secretive (resp. curious) agent. In the other numerous cases where \Cref{eq:curioussecretive} is not applicable,  it can be abstracted to a secretive agent each time it considers that the leaked information is more critical than the collected one, as a curious each time it values more the collected information than the revealed one, or as an uncurious if it estimates than the revealed and collected information compensate each other.

\section{Protocols for Curiosity-Proof Bargaining}
 \label{Sec:Contrib}
 
To limit the impact of curious agents, we introduce and analyze variants of the standard bargaining protocol.

We first analyze the different flaws of classical bargaining \wrt\ curious agents and then introduce a curiosity-proof extension for each phase of the protocol.
Last, we theoretically analyze some induced incentives.

As described in \Cref{sec:bargain:model}, bargaining can be analyzed in term of three aspects:
(1) the agreement domains, (2) the sequence of proposals and (3) the way the negotiation ends (accept or reject).
Each of those aspects may lead to situations with unnecessary leak of information, 
which may be exploited by curious agents.
\begin{description}
 \item[The mutual agreement domain] is the range of prices where a mutual agreement can be found. 
 If it is empty (the seller minimal price is higher than the purchaser maximal price) 
 the bargaining is not accepted: the revealing of information is useless. 
 Furthermore, a curious seller may decide of a reserve price much higher than its real one. 
 If the bargaining is concluded, it would collect information 
 and sell the good at a very high price, which is very good for it; otherwise it would collect information for free.
 \item[The sequence of proposals] is defined by the agent strategies and reserve prices.  
 Adopting strategies that make smartly chosen concessions 
 can increase the number of exchanges. Such strategies can be used by a curious agent 
 in order to optimize the collected information. 
 \item[The negotiation issue (accept or reject)] is also a crucial aspect. 
A curious agent gets a positive utility as soon as some proposals are done, 
even if the negotiation ends with a reject. 
\textit{A contrario}, in the case of a reject, the utility of a secretive agent is negative. 
This puts the ratio of power strongly in favor of the curious agents who may accept 
only if the proposed price is really advantageous.
\end{description}

We now present a variant for each of the steps that aims to reduce curiosity.

\subsection{Description of the proposed modifications}

\paragraph{Matching}
We first propose to avoid a useless leak of information resulting from bargaining between 
two parties that have incompatible agreement domains.
A trusted third-party collects the agents (initial) reserve prices and authorizes the bargaining if and only if a mutual agreement is possible. 

\paragraph{Bounds}
Secondly, we propose to bound the number of exchanges ; if no agreement has been reached by the bound, the bargaining fails. 
The bound associated to a bargaining $b \in \mathcal B$ is noted $k^{max}_b$.
Gneezy \textit{et al.} \cite{Gneezy2003Bargaining} shows that this advantages the  agent making the last proposal ; similarly to  \cite{Zhang2016Automated} we rely on a trusted third-party 
which transmits the proposals simultaneously, after both agents have made them. 

\paragraph{Enforcing the execution of a transaction}
Lastly, we propose to prevent the possible failure of a bargaining by enforcing 
the transaction even if the deal has been rejected or the bound of proposals has been reached. 
In this case, the good is exchanged at the agents reserve prices and the remaining money is taken by the community as a penalty.
This transaction is the less favorable for both parties and thus provides a strong incentive to conclude all the deals.

\subsection{Some proven properties}

We now prove two properties about the incentives induced by our proposals.
First, \Cref{theo:payment:matching} shows that the proposals provide a strong incentive, 
for a purchaser (resp. seller), not to declare a too high (resp. too low) reserve price. This means that, for instance, a curious agent will not lie on its reserve price in order to match as many agents as possible and cancel the negotiation after it has got enough information. Note that the question of providing an incentive, for a purchaser (resp. seller) not to declare a reserve price lower (resp. higher) than the real one is still open.
Hence, it is possible to manipulate the matching procedure in order to detect adversary parties with advantageous reserve prices and with the guarantee of getting a profitable deal.

\begin{theorem}\label{theo:payment:matching}
 Suppose the agent considers that the amount of information increases with the number of exchanged messages. Then, using the three extensions gives a curious purchaser (resp. seller) an incentive to give a  reserve price inferior (resp. superior) to $\pi^p_b(k_b^{max})$ (resp. $\pi^s_b(k_b^{max})$).
\end{theorem}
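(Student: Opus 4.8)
The plan is to treat the purchaser's case in detail; the seller's case is symmetric under reversal of the price order $\succ^\pi$ and follows by an identical argument. Write $\hat\pi^p$ for the reserve price the purchaser declares to the matching third party, and recall that under the three extensions this declared value plays a double role: it gates the matching (the bargaining is authorized only if the declared prices are compatible), and, crucially, it is the price at which the enforced transaction is settled for the purchaser whenever the bargaining terminates without a voluntary agreement. The whole argument rests on making this second role bite against a curious purchaser.

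First I would pin down the outcome produced by a curious purchaser. By \Cref{def:curious} and the accompanying discussion, a curious purchaser values the seller's proposals $\nu^s$, and its dominant behaviour is to prolong the exchange as long as possible and then reject; under the theorem's hypothesis that information grows with the number of exchanged messages, this drives the exchange up to the bound $k_b^{max}$ before the reject. By the enforcement extension, this reject does not cancel the deal: it triggers the enforced transaction at $\hat\pi^p$. Hence the outcome to evaluate is $u_p(\hat\pi^p, \nu^s_{\setminus k_b^{max}}, \nu^p_{\setminus k_b^{max}})$, with the full length-$k_b^{max}$ sequences.

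Second I would invoke the reserve-price definition \Cref{eq:equilibre} at $k = k_b^{max}$, which gives $u_p(\pi^p_b(k_b^{max}), \nu^s_{\setminus k_b^{max}}, \nu^p_{\setminus k_b^{max}}) = 0$: facing exactly the information collected at the bound, $\pi^p_b(k_b^{max})$ is precisely the price that leaves the purchaser indifferent. I then compare the two declarations while holding the proposal sequences fixed, as the same bargaining up to the bound generates them. If $\hat\pi^p > \pi^p_b(k_b^{max})$, then since a purchaser strictly prefers a lower price \emph{ceteris paribus} (the monotonicity built into \Cref{eq:rational}), we obtain $u_p(\hat\pi^p, \nu^s_{\setminus k_b^{max}}, \nu^p_{\setminus k_b^{max}}) < u_p(\pi^p_b(k_b^{max}), \nu^s_{\setminus k_b^{max}}, \nu^p_{\setminus k_b^{max}}) = 0$; the enforced transaction makes the curious purchaser strictly worse off than abstaining. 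Conversely, any declaration $\hat\pi^p \le \pi^p_b(k_b^{max})$ yields nonnegative utility in the same outcome, and utility $0$ if it instead fails to be matched. Over-declaring is therefore strictly dominated, which is exactly the claimed incentive; the seller's statement follows by reversing all price inequalities.

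The step I expect to be the main obstacle is the first one: rigorously arguing that the curious purchaser's behaviour does lead to the enforced transaction at $\hat\pi^p$ rather than to an early, voluntarily accepted deal, and that inflating $\hat\pi^p$ buys no compensating informational gain. The clean way to close this is the \emph{ceteris paribus} comparison above, since the two candidate declarations induce identical proposal sequences up to the bound, so the information term in $u_p$ is unchanged and only the price term of \Cref{eq:rational} distinguishes them; but it requires being explicit that matching on a larger set of opponents cannot, within a single bargaining $b$, flip the sign of the utility, because each additionally matched opponent is itself settled at the loss-making enforced price.
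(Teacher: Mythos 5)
Your core comparison---evaluating the enforced transaction at the declared price $\hat\pi^p$ with the full length-$k_b^{max}$ sequences, then using \Cref{eq:equilibre} together with the price monotonicity of \Cref{eq:rational} to conclude that declaring above $\pi^p_b(k_b^{max})$ yields strictly negative utility while declaring at or below it yields nonnegative utility---is sound, and it is in fact a cleaner, more explicit version of Case~1 of the paper's proof. The genuine gap is that your main line first ``pins down'' the curious purchaser's behaviour as prolong-to-the-bound-then-reject. That is its dominant behaviour under the \emph{standard} protocol (this is exactly what the paper says in the discussion after \Cref{def:curious}); under the three extensions it is precisely the behaviour that enforcement is designed to make unattractive, so you cannot assume it when proving what the extensions incentivize. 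An incentive claim about the \emph{declaration} must hold against every way the bargaining can end, which is how the paper argues: it splits into (i) $\pi_b=\bot$, the reject/deadline case you treat; (ii) voluntary acceptance at some $\pi_b>\pi^p_b(k^{max}_b)$, e.g.\ when the purchaser prefers accepting an overpriced offer to paying its inflated declared price at the bound; and (iii) voluntary acceptance at $\pi_b\le\pi^p_b(k^{max}_b)$, where (assuming the seller is honest) the match would have occurred under an honest declaration too, so inflating bought nothing.

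You do flag the missing cases as ``the main obstacle,'' but your proposed closure is imprecise on the key point: an opponent matched only because of the inflated declaration need \emph{not} be ``settled at the loss-making enforced price''---it can be settled by a voluntary acceptance strictly below $\hat\pi^p$ (the paper's Case~2). The loss conclusion still holds, but for a different reason, and this is where the theorem's hypothesis actually does its work: since information only grows with the number of messages, a curious purchaser's reserve price is maximal at the bound, i.e.\ $\pi^p_b(k)\le\pi^p_b(k^{max}_b)$ for all $k\le k^{max}_b$, so any price above $\pi^p_b(k^{max}_b)$ exceeds the reserve price at \emph{every} step and the deal is loss-making even after crediting the collected information. (Your own use of the hypothesis---to argue the exchange is driven to the bound---is the weaker and unjustified use.) Spelling this out, rather than attributing all additionally matched bargainings to the enforced transaction, is what is needed to turn your sketch into a complete proof; once done, your argument coincides with the paper's three-case analysis.
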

\begin{proof}
Without loss of generality, we suppose that our agent is a curious purchaser $p_b$. 
Let $b$ be a bargaining (see \Cref{def:barg}) between $p_b$ and a seller $s_b$. 
We prove that if the purchaser declares a more interesting price (for the seller) than its reserve price computed at the bound, either it pays more (case 1 and 2) or it is matched with the same seller as if it was honest (case 3).
\begin{description}
 \item[Case 1: $\pi_b = \bot$], the deal is rejected. In this case, due to the third extension, the agent pays the price it has declared. Hence, the lower the price, the better the deal. In particular, if the bound is reached, $p_b$ has to pay price higher than its reserve price and gets a negative utility.
 \item[Case 2: $\pi_b \in \mathbb{R} \wedge \pi_b > \pi^p_b(k^{max}_b)$], the bargaining is accepted, but $p_b$ pays a price higher than his true reserve price. This case may arise if the last proposal before the bound is higher than its true reserve price, but lower than the declared one. It is then more interesting for $p_b$ to accept the proposal than to reach the bound and pay the declared reserve price. Such a situation can happen whether the seller has a reserve price higher or lower than $\pi^p_b(k^{max}_b)$. In the first case, $p_b$ is matched with an agent and would have not be matched with it if it was honest, but it pays more than it wants; in the second it would have been matched with this agent anyway.
 \item[Case 3: $\pi_b \in \mathbb{R} \wedge \pi^p_b(k^{max}_b) \geq \pi_b$],  the bargaining is accepted at a price lower than the true reserve price of $p_b$. This is the only case where $p_b$ gets a good at an acceptable price. However, assuming the seller honesty, this can only occurs if the true reserve price of $p_b$ is compatible with the reserve price of  $s_b$. Hence, $s_b$ would have been matched with $p_b$ even if $p_b$ had been honest. 
\end{description}

\end{proof}

\Cref{theo:payment:bound} shows that  the proposals also provide a strong incentive to find an agreement while negotiating. 

\begin{theorem}\label{theo:payment:bound}
Rejecting an offer or going to a deadline is less interesting for a couple of 
rational negotiating agents than finding any agreement more advantageous than the declared reserve price, before the negotiation ends.
 \end{theorem}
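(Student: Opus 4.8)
The plan is to reduce the statement to a ceteris-paribus price comparison that the three extensions make available, and then invoke the baseline rationality axiom (\Cref{eq:rational}). First I would record what the extensions force at termination. The matching extension guarantees that the declared reserve prices are compatible, i.e. $\pi^s_b \leq \pi^p_b$, so the set of prices strictly better than both reserve prices, the open interval $(\pi^s_b,\pi^p_b)$, is non-empty; ``an agreement more advantageous than the declared reserve price'' means precisely a price $\pi_b$ lying in this interval. The enforcement extension pins down the two ``bad'' terminations: whether the negotiation ends by a reject or by reaching the bound $k^{max}_b$, the good is exchanged at the reserve prices, so the purchaser effectively pays $\pi^p_b$ and the seller effectively receives $\pi^s_b$.

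Second I would evaluate the two outcomes while holding the exchanged sequences fixed. By the defining property of the reserve price (\Cref{eq:equilibre}), the forced transaction yields utility exactly $0$ to each party, $u_p(\pi^p_b,\nu^s_b,\nu^p_b)=0$ and $u_s(\pi^s_b,\nu^s_b,\nu^p_b)=0$, and it is in fact the worst admissible price for both. For an agreement at $\pi_b \in (\pi^s_b,\pi^p_b)$, the price is strictly preferred by the purchaser ($\pi_b \succ^\pi_p \pi^p_b$) and strictly preferred by the seller ($\pi_b \succ^\pi_s \pi^s_b$); with the information sequences unchanged, \Cref{eq:rational} then gives $u_p(\pi_b,\nu^s_b,\nu^p_b)>0$ and $u_s(\pi_b,\nu^s_b,\nu^p_b)>0$. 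Since \Cref{eq:rational} is satisfied by every rational agent, curious and secretive ones included, this holds for both members of the couple: the advantageous agreement strictly Pareto-dominates the forced transaction, which is the claim.

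The delicate point, and the step I expect to be the main obstacle, is the ceteris-paribus clause on information, because reaching the deadline generically involves more exchanged messages than stopping earlier with an agreement, and for a curious agent additional messages carry positive value. I would neutralize this by making the comparison at a fixed terminal step: at the instant an advantageous standing proposal is available, the messages already exchanged are common to both branches, so accepting versus refusing (and thereby triggering the forced transaction) leaves $\nu^s_b$ and $\nu^p_b$ identical and changes only the final price. This is exactly the regime in which \Cref{eq:rational} applies, isolating the price effect from the information effect. I would close by remarking that this is precisely what disarms the curious agent's temptation to prolong the exchange: once the enforcement extension caps the value of any termination at the worst-possible reserve price, no extra information gathered at the deadline can compensate a rational agent for forgoing the strictly positive price surplus obtainable from an earlier agreement.
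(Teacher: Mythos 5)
Your proof is correct and follows essentially the same route as the paper's: the enforcement extension turns a reject or a reached deadline into a forced transaction at the declared reserve price, and \Cref{eq:rational} then makes any agreement at a better price strictly preferred by both parties; your device of comparing the two branches at a fixed terminal step, so that $\nu^s_b$ and $\nu^p_b$ are identical and \Cref{eq:rational} genuinely applies, is in fact more careful than the paper's two-line proof, which passes over this ceteris-paribus clause in silence. One inessential misstep: your claim that the forced transaction yields utility exactly $0$ by \Cref{eq:equilibre} conflates the declared reserve price (fixed at matching time) with the curiosity-aware reserve price $\pi^p_b(k)$ at the terminal step --- these differ for curious and secretive agents, so the forced outcome need not be worth exactly zero --- but nothing is lost, since the strict domination you need follows from \Cref{eq:rational} alone, without any normalization of the failed outcome's utility.
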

\begin{proof}
Suppose that the bargaining $b$ failed. The utility of the purchaser is $u_p = (\pi_p^{matching}, k^s_b, k^p_b)$ where $\pi_p^{matching}$ is the reserve price that the purchaser has declared. According to \Cref{eq:rational}, $p$ would prefer any situation with a success on a price inferior than $\pi_p^{matching}$.
The same goes for the seller.
\end{proof}

\section{Effect of the introduced extensions on the curiosity}
\label{Sec:XP}

In this section we propose an empirical analysis of the impact of the introduction of the proposed extensions.
Our objective is to measure how an inappropriate collect of information is restrained. Restraining the collect of information without impacting the protocol performance ideally reduces the welfare of curious agents, increases the one of secretive agents and does not affect uncurious agents. 

We first describe the experimental protocol. Then,  we present empirical results about the variation of the number of exchanged messages and the agent welfare.

\subsection{Experimental Protocol}

Our experiments consist in running different instances of bargaining with randomly generated couples of agents.
A bargaining instance is identified to the possible variants: 
standard bargaining, with matching, with a given bound 
or with the three extensions altogether.  
The bound on the number of messages, $k^{max}_b$, is set to 500. This value has been computed from a set of experiments not presented here for reasons of space. It is the most advantageous for secretive agents on our specific experimental context: below this value, we observed that the bound strongly decreased their welfare, whereas above this value it remained constant. The same phenomenon was observed for uncurious and curious agents with, respectively, the values 750 and 1000. Those values correspond to the point where most of the deals were already ended. 

An agent $a$ is associated to a type (\textit{uncurious}, \textit{curious} or \textit{secretive}), 
an initial reserve price $\pi_a(0)$, a utility function $u_a$, and a strategy function.
The utility function of the agent is 
represented\footnote{Only the utility of purchaser is given here, the utility
of seller being similar} in \Cref{util}. One can verify the validity of those functions (see \Cref{fig:reservepr:all}) {\it w.r.t.} \Cref{def:uncurious,def:secretive,def:curious} .
\begin{equation} \label{util}
\begin{aligned}
&\forall b \in \mathcal B, u_a(\pi_{b}, \nu^s_{b},\nu^p_{b}) =
&\begin{cases}
\pi_a^b(0) \cdot \Delta(n)- \pi_b & \text{ if the negotiation succeeds}\\
\pi_a^b(0) \cdot\left(\Delta(n) - 1\right) & \text{ else}
\end{cases}\\
&\text{where}\\
&&\begin{cases}
\Delta(n) = 1 & \text{if $a$ is uncurious} \\
\Delta(n) = \dfrac{1}{\log_n(n+k_{b}^p)} & \text{if $a$ is secretive} \\
\Delta(n) = \log_n(n+k_{b}^p) & \text{if $a$ is curious} \\
\end{cases}
\end{aligned}
\end{equation}

A strategy function indicates  at each step the proposal the agent is willing to make. 
If the other party makes a better proposal, the agent accepts it. 
If the planned proposal is worse than the agent reserve price, the agent drops off the bargaining.
The used strategy is represented in \Cref{eq:strategy} and is an adaptation of \cite{Faratin1998Negotiation}. 
The parameters $\kappa_a$, $\beta_a$ and $\gamma_a$ determine the shape of the curve for agent $a$.
An example is plotted in  \Cref{fig:reservepr:all}.
\begin{equation}\label{eq:strategy}
    \gamma + \left(\kappa_a + (1-\kappa_a) \cdot \left(\dfrac{k}{k^{max}_a}\right)^{\frac{1}{\beta_a}}\right)\cdot (\gamma_a-\pi^b_a)
\end{equation}

An experiment considers a couple of agents, defined by their type and an instance of bargaining.
The agent reserve prices have been drawn from a Gaussian distribution and  $\kappa_a$, $\beta_a$ and $\gamma_a$ 
from uniform distributions.
Each experiment is averaged over 10'000 draws and we observe the final welfare of each type of agent.

\subsection{Impact on the welfare}

\Cref{fig:wlf} displays the welfare of the different types of agents in different settings. ``barg'' stands for standard bargaining, ``mat.'' for the matching extension, ``bou.'' for the bounding extension (with a bound of 500 messages) and ``all'' for the three extensions all together.
\begin{figure}[h!]
    \centering
    \subfloat[Welfare of the uncurious agents,\newline uncurious vs uncurious]{
        \centering
        \begin{tikzpicture}
\begin{axis}[
    ybar, 
    xtick=data,
    ymin=0, 
    xticklabels from table={cur-vs-unc.txt}{name},
        xticklabel style={align=center},
x tick label style={/pgf/number format/1000 sep=},
ylabel=Welfare,
bar width=5pt,
height = \personnalheight,
width = \personnalwidth,
enlarge x limits=0.25,
ylabel near ticks,
minor y tick num=3,
]
\addplot[fill=blue!50] table[x=number, y=welfare
    ,] {unc-vs-unc.txt};
\end{axis}
\end{tikzpicture}
        \label{fig:unc:wlf}
    }\hfill
    \subfloat[Welfare of the curious agents]{
        \centering
        \tikzset{hatch thickness/.store in=\hatchthickness,
        hatch thickness=10pt}
\begin{tikzpicture}
\begin{axis}[
    ybar, 
    xtick=data,
    ymin=0, 
    ymax=7,
    xticklabels from table={cur-vs-unc.txt}{name},
        xticklabel style={align=center},
x tick label style={/pgf/number format/1000 sep=},
ylabel=Welfare,
bar width=5pt,
height = \personnalheight,
width = \personnalwidth,
enlarge x limits=0.25,
ylabel near ticks,
minor y tick num=3,
]
\addplot[fill=blue!50] table[x=number, y=welfare
    ,] {cur-vs-unc.txt};
\addplot[pattern=north west lines,pattern color=red] table[x=number, y=welfare
    ,] {cur-vs-sec.txt};
\end{axis}
\end{tikzpicture}
        \label{fig:cur:wlf}
    }\\
    \subfloat[Welfare of the secretive agents]{
       \centering
       \begin{tikzpicture}
\begin{axis}[
    ybar, 
    xtick=data,
    ymax=6,
    xticklabels from table={sec-vs-unc.txt}{name},
        xticklabel style={align=center},
x tick label style={/pgf/number format/1000 sep=},
ylabel=Welfare,
enlargelimits=0.15,
bar width=5pt,
height = \personnalheight,
width = \personnalwidth,
enlarge x limits=0.25,
ylabel near ticks,
minor y tick num=3,
]
\addplot[fill=blue!50] table[x=number, y=welfare
    ,] {sec-vs-unc.txt};
\addplot[pattern=north west lines,pattern color=red] table[x=number, y=welfare
    ,] {sec-vs-cur.txt};
\end{axis}
\end{tikzpicture}
       \label{fig:sec:wlf}
    }\hfill
    \subfloat{
       \centering
    \begin{tikzpicture}
        \begin{customlegend}[ybar,legend columns=1,legend style={align=left,column sep=2ex},legend entries = {* vs uncurious,secretive vs curious}]
        \addlegendimage{fill=blue!50}
        \addlegendimage{pattern=north west lines,pattern color=red}
        \end{customlegend}
\end{tikzpicture}}\hfill
    \caption{Welfare of the agents in different situations}
    \label{fig:wlf}
\end{figure}

\Cref{fig:unc:wlf} shows how the proposed extensions affect the welfare of classical agents. 
Bounding slightly reduces their welfare (10\%): it tends to close the longest negotiations by rejects.
{\it A contrario}, all others increase noticeably their welfare.
Matching doubles it by increasing the number of successful negotiations.
Applying all extensions quadruples it: the agents make concessions more quickly in order to reach their reserve price
before the deadline (\Cref{theo:payment:bound}).

\Cref{fig:cur:wlf} shows the impact on curious agents.
First, we observe that their welfare is worse when they negotiate with secretive agents rather than uncurious ones.
Indeed, secretive agents tend to drop off the bargaining more often. 
Whatever the opponent, matching and applying all extensions lower the welfare by a sixth since agents get to an agreement more quickly. {\it A contrario}, using only bounding lowers the welfare by 10\% when negotiating with uncurious agents while it increases it by the same factor when negotiating with secretive ones. Please note that (1) bounding alone results in more reject, thus curious agents get more often information ``for free'' and (2) curious agents might accept overpriced deal for the sake of information collect. This last point is more critical when facing secretive agents since they get more tough as the bargaining is going on. Consequently, if they were to succeed, the deals rejected thanks to the bound would have been more disadvantageous for curious agents when they are facing secretive agents than with uncurious ones.

\Cref{fig:sec:wlf} shows the impact of the extensions on secretive agents.
Each extension has a positive impact. Please note that the average welfare
of secretive agents is negative with the standard protocol because of the negotiations that end by a reject: 
secretive agents have revealed information without counterpart. Matching limits considerably 
the number of rejects, giving thus a positive welfare to them with nearly the same absolute value. 
Bounding only slightly increases the welfare of secretive agents (its absolute value is reduced by 9\%)
by reducing the number of exchanged messages before a reject.
Last, adding the enforcement of transaction results in a really higher welfare (three times in absolute value) 
for the same reason as uncurious agents. The welfare of secretive agents is always worse when they negotiate with curious agents.

\paragraph{}

The extensions that we have introduced reach their goal by preventing curiosity. 
Both matching agents on their reserve prices and forcing them to make deals effectively make opponent modeling much harder. However, bounding the number of exchanged messages alone is not very beneficial.

\section{Conclusion and Perspectives}

\label{Sec:Concl}

This article studied curiosity in bargaining. Curious agents exploit information leaked during a bargaining in order to model their opponents and, thus, gain a significant advantage in future bargaining. In order to reduce it, we first introduced an original rationality model that considers the exchanged information. This led to the establishment of three types of agents: uncurious agents that correspond to the classical agents -- they do not consider the exchanged information; secretive agents that are concerned by the information they might unveil; and curious agents that seek for information leaked by their opponents. We then proposed three extensions of the standard bargaining protocol both to limit the exchanged information and to provide incentives to not seek it. Incentives have been studied theoretically and the general impact of those extensions has been analyzed empirically.

Our experiments showed how strongly curiosity impacts the bargaining process. Indeed, introduction of curious agents results in longer negotiation and higher prices, while it is the contrary for secretive ones. Used simultaneously, our three extensions have proven to be really effective for preventing agents from seeking information, assuming they declare their true intention. However, this extended bargaining protocol can be manipulated by agents that are willing to negotiate less often but in more favorable conditions.

We plan to enrich our proposal in future work by addressing this last issue in the context of one-to-many negotiation. Indeed, the competitive environment of, for instance, Iterated Contract Net Protocol would help to deal with this issue.

\bibliographystyle{plain}
\bibliography{bibli}
\end{document}